\def\BibTeX{{\rm B\kern-.05em{\sc i\kern-.025em b}\kern-.08em
    T\kern-.1667em\lower.7ex\hbox{E}\kern-.125emX}}
\newtheorem{theorem}{Theorem}[section]
\newtheorem{lemma}[theorem]{Lemma}
\pgfplotsset{compat=newest}
\DeclareMathOperator*{\argmax}{arg\,max}
\DeclareMathOperator*{\R}{\mathbb{R}}
\DeclareMathOperator*{\E}{\mathbb{E}}
\DeclareMathOperator*{\C}{\mathcal{C}}
\renewcommand{\Vec}{\mathbf}
\DeclareMathOperator{\w}{\Vec{w}}  
\DeclareMathOperator{\mpos}{\bm{\mu}_+}
\DeclareMathOperator{\mneg}{\bm{\mu}_-}
\DeclareMathOperator{\Dmu}{\mpos - \mneg}
\DeclareMathOperator{\Sb}{\bar{\Sigma}}
\DeclareMathOperator{\Sh}{\hat{\Sigma}}
\DeclareMathOperator{\mb}{\bar{\bm{\mu}}}
\DeclareMathOperator{\mD}{\bm{\mu}_\Delta}
\DeclareMathOperator{\diag}{diag}
\newcommand{\notprop}{\propto\kern-1\@ptsize pt \diagup}
\begin{document}
\title{Minimally Informed Linear Discriminant Analysis:\newline training an LDA model with unlabelled data}

\author{Nicolas~Heintz,
        Tom~Francart,
        and~Alexander~Bertrand,~\IEEEmembership{Senior~Member,~IEEE}%
\thanks{N. Heintz and A. Bertrand are with KU Leuven, Department of Electrical Engineering (ESAT), STADIUS Center for Dynamical Systems, Signal Processing and Data Analytics, Belgium.}%
\thanks{N. Heintz and T. Francart are with KU Leuven, Department of Neurosciences, ExpORL, Belgium.}%
\thanks{This research is funded by Aspirant Grant 1S31522N (for N. Heintz) from the Research Foundation - Flanders (FWO), the Research Foundation - Flanders (FWO) project No G0A4918N and G081722N, the European Research Council (ERC) under the European Union’s Horizon 2020 research and innovation program (grant agreement No 802895 and grant agreement No 637424), and the Flemish Government (AI Research Program).
The scientific responsibility is assumed by its authors.\\
N. Heintz, T. Francart and A. Bertrand are also affiliated with Leuven.AI - KU Leuven institute for AI, Belgium.}}%
\maketitle
\begin{abstract}
Linear Discriminant Analysis (LDA) is one of the oldest and most popular linear methods for supervised classification problems. In this paper, we demonstrate that it is possible to compute the exact projection vector from LDA models based on unlabelled data, if some minimal prior information is available. More precisely, we show that only one of the following three pieces of information is actually sufficient to compute the LDA projection vector if only unlabelled data are available: (1) the class average of one of the two classes, (2) the difference between both class averages (up to a scaling), or (3) the class covariance matrices (up to a scaling). These theoretical results are validated in numerical experiments, demonstrating that this minimally informed Linear Discriminant Analysis (MILDA) model closely matches the performance of a supervised LDA model. Furthermore, we show that the MILDA projection vector can be computed in a closed form with a computational cost comparable to LDA and is able to quickly adapt to non-stationary data, making it well-suited to use as an adaptive classifier.
\end{abstract}
\begin{IEEEkeywords}
unsupervised learning, classification, linear discriminant analysis, adaptive learning
\end{IEEEkeywords}
\section{Introduction}
\label{sec:intro}
Fisher's Linear Discriminant Analysis (LDA) model is a standard algorithm in data science for linear classification \cite{Hastie2009}. Its goal is to find the optimal linear projection of the data such that two classes are maximally separated. Despite the emergence of more advanced techniques, LDA remains a very popular data analysis tool in all scientific fields \cite{Bouveyron2019, Auguin2021, Liu2020, Antuvan2019, Ricciardi2020, Zhu2022, Elkhalil2020} because it is easy to understand, cheap to compute, and has an elegant and interpretable formulation. Its low computational complexity also makes it a popular tool for signal processing, in particular in real-time or low-resource environments \cite{Liu2020,Antuvan2019}. 

In his original LDA paper, Fisher proposed a transformation that maximises the distance between class averages and minimises the variance within each class \cite{Fisher1936}. Computing Fisher's LDA model requires four (or two) parameters: the two class averages (or their difference) and the class covariance matrices of both classes (or their sum). These parameters can either be known via prior knowledge or estimated using labelled data. Herein lies one of the main drawbacks of LDA (and any other supervised model): as soon as one of these parameters is unknown, a labelled dataset is required to train the LDA model. For many applications, this requires manual labelling by experts, which is time-consuming and costly. Furthermore, when a problem is non-stationary, i.e. when the statistics of the data change over time, it is often important to regularly re-train the model, which only exacerbates the aforementioned issues.

Alternatively, one can resort to unsupervised classification algorithms that attempt to classify given data without requiring labels. Such unsupervised classification problems are generally tackled with clustering algorithms such as, e.g., Gaussian Mixture Models (GMMs) \cite{Duda1974}, K-means \cite{Lloyd1982}, and its extension K-means++ \cite{Arthur2007}. However, the modelling power and robustness of LDA, in particular in high-dimensional spaces, have motivated the incorporation of LDA in such unsupervised frameworks. A popular approach is to iteratively estimate a set of pseudo-labels using some unsupervised clustering method and then train the LDA model with these labels \cite{DeLaTorre2006, Deng2019, Wang2021}. 

Another solution to the labelling problem is to only label a limited amount of samples and then train the model on a small number of labelled samples and a large number of unlabelled samples, which is called semi-supervised learning. For semi-supervised LDA, the small number of labelled samples can then give a rough indication of where the two classes are supposed to be located. This information can then be merged with a much better description of the general geometry of the data delivered by the vast amount of unlabelled data through some objective function to provide an estimation of the LDA projection, as done in \cite{Cai2007, Wang2016}. 

All these proposed methods require (pseudo-)labels in order to compute Fisher's LDA projections. However, there exist many scenarios where some of the relevant ground-truth statistics of the data are known (or can be estimated without labels), but not all of them:
\begin{itemize}
    \item Only the \textbf{class average} of a \textbf{single class} is known: This could occur, e.g., when there is a (partial) model for one of the classes, but not for the other. One example could be a detection problem where the noise is known to be zero-mean, but where its covariance structure is unknown. In the absence of a detection target (class 0), the mean vector is then known to be 0. Another example is hypothesis testing, where the mean of the null hypothesis distribution is assumed to be known.
    \item Only the \textbf{relative difference} between the \textbf{class averages}  is known (up to a scaling): This is relevant when it is known in what direction the features are expected to move when shifting from one class to the other, without knowing how large the change is expected to be, what the exact class averages are or what variance is expected. This is the case, for example, when a binary amplitude-modulated signal must be detected with a sensor array. The steering vector of this signal is often known up to an unknown scalar, while there can simultaneously be ambiguity in the amplitude of the source signal.
    \item Only the \textbf{covariances} for each of the \textbf{two classes} are known (up to a scaling): Such a scenario is common when, e.g., the noise covariance statistics are known (possibly up to an unknown calibration factor). This also includes the common case where the noise is independent and identically distributed across the feature space. 
\end{itemize}

In general, the availability of one such ground-truth statistic is not enough to train a classifier from unlabelled data, in which case the remaining statistics still need to be estimated from labelled data. One exception is the last case, with the additional assumption that the classes have a Gaussian distribution and that they are sufficiently separated. In this case, it is enough to know the class covariance matrices to compute LDA without labels \cite{Pena2000, Martin-Clemente2020}.

In this paper, we present a mathematical formulation that actually makes it possible to compute the LDA projection with only \textit{one} of the aforementioned ground-truth statistics, which we refer to as `minimally informed' LDA (MILDA). In contrast to \cite{Pena2000, Martin-Clemente2020}, it does not assume Gaussian class distributions, does not require a good class separability and is applicable if any of the aforementioned ground-truth statistics are known. 

The proposed MILDA framework still results in closed-form expressions, which are elegant and cheap to compute. This is in line with the main advantages of the standard LDA. MILDA optimises a slightly more general objective function than LDA, and has the original LDA objective as a special case when either the classes are balanced or if the homoscedasticity assumption is satisfied. However, even if neither of these assumptions hold, MILDA empirically still achieves similar accuracies as LDA. Furthermore, MILDA becomes more robust to estimation errors in the ground-truth statistic when the classes are less separated or imbalanced. This is in stark contrast with most unsupervised clustering methods. Up to our knowledge, such a result has not been described.

The aforementioned properties make MILDA especially suited for use in adaptive modelling. Typical examples are, e.g., classifying a non-stationary signal with stationary noise or vice versa. The stationary statistics can then be estimated and used as ground-truth statistics, while MILDA automatically adapts to the non-stationary statistics. 

The outline of the paper is as follows. In Section \ref{sec:MILDA}, we describe the main mathematical framework behind the MILDA model. In Section \ref{sec:sensitivity} we provide a theoretical analysis of the sensitivity of MILDA to errors in the used ground-truth statistic. In Section \ref{sec:experiments}, we evaluate MILDA in numerical experiments and demonstrate its performance on a target signal detection task and on non-stationary data. The computational cost of MILDA is briefly discussed in Section \ref{sec:CompCost}. Finally, in Section \ref{sec:conclusion}, we summarise the main conclusions from this paper and discuss possibilities for future work. 

As a notational convention, we use regular letters ($a$) to denote scalars, bold letters ($\Vec{a}$) for vectors, capital letters ($A$) for constants and matrices (depending on the context) and Greek letters ($\alpha$) for parameters and latent scalars to shorten equations.

\section{Minimally informed LDA (MILDA)}
\label{sec:MILDA}
In this section, we will first briefly review the main mathematical framework of LDA as proposed by Fisher \cite{Fisher1936}. In Section \ref{subsec:LDAwoLabels}, we show how the same projection can be found with the temporary (and overly strict) assumption that the two class averages are proportionate to each other. This generates an intermediate result that will be needed in the derivation of MILDA. Since this assumption of proportionality is unrealistic in real-life applications, we show in Section \ref{subsec:Transformations} how this assumption can be dropped when only a single ground-truth statistic is known. No other assumptions on the data are necessary to prove that the MILDA and LDA projections are equivalent.

\subsection{Linear Discriminant Analysis}
\label{subsec:LDA}
Let $\{\Vec{x}_i\}_i$ with $\Vec{x}_i \in \R^D$ be a set of $D$-variate samples, where each sample belongs to one of two classes. The goal of Linear Discriminant Analysis (LDA) is to find a projection vector $\w \in \R^D$ that maximally separates two classes after projecting each data point onto $\w$. An intuitive way to do this is by searching for the projection that maximises the distance between the class averages and minimises the variance within each class. By writing these two joint objectives as a ratio, Fisher proposed the following objective:

\begin{equation}
\label{eq:objectiveLDA}
    \argmax_{\w} \frac{\w^\top(\Dmu)}{\w^\top (\Sigma_+ + \Sigma_-) \w},
\end{equation}
where $\mpos$, $\mneg$ are the class averages and $\Sigma_+$, $\Sigma_-$ are the covariance matrices for each class \cite{Fisher1936}. If a labelled set of samples $\Vec{x}_i$ is available, then the class averages and class covariances can be estimated as\footnote{We use the biased covariance estimator throughout this paper for notational convenience.}:
\begin{align}
\bm{\mu}_\pm &\triangleq \frac{1}{N_\pm} \sum_{i\in\C_\pm} \Vec{x}_i \label{eq:mupm}\\
\Sigma_\pm    &\triangleq \frac{1}{N_\pm} \sum_{i\in\C_\pm} (\Vec{x}_i-\bm{\mu}_\pm)(\Vec{x}_i-\bm{\mu}_\pm)^\top, \label{eq:SposOriginal}
\end{align}
where $\C_+$ and $\C_-$ contain the sample indices of the positive and negative class, respectively, with cardinalities $N_+=|\C_+|$ and $N_-=|\C_-|$. The solution of \eqref{eq:objectiveLDA} is then given by:
\begin{equation}
\label{eq:LDA}
    \w^* \propto (\Sigma_+ + \Sigma_-)^{-1}(\Dmu),
\end{equation}
assuming $\Sigma_+ + \Sigma_-$ is full rank\footnote{If the summed covariance matrix is not full rank, a dimensionality reduction should be performed first, e.g., via principal component analysis.} \cite{Fisher1936}. Here we use the notation $\propto$ to denote that the left- and right-hand sides are equal up to an arbitrary non-zero scaling.

It was later shown that this solution minimises the classification error if both classes have a Gaussian distribution and if they are homoscedastic ($\Sigma_+ = \Sigma_-$), i.e., no other classifier can result in better classification performance \cite{Hastie2009}. In this case, $\w \propto \Sigma^{-1}(\Dmu)$, with $\Sigma = \Sigma_+ = \Sigma_-$. Confusingly, this solution is often also referred to as \textit{Linear Discriminant Analysis}, but it is an edge case of the more general LDA formulation provided by Fisher. We note that Gaussianity and homoscedasticity should not be viewed as limiting assumptions in order to use LDA, i.e., without these assumptions, the LDA objective \eqref{eq:objectiveLDA} remains relevant and often results in a good classifier \cite{Bouveyron2019, Hastie2009}.

In this paper, we will show that MILDA provides a projection vector that optimises a slightly more general objective (compared to \eqref{eq:objectiveLDA}): 
\begin{equation}
\label{eq:objectiveMILDA}
    \argmax_{\w} \frac{\w^\top(\Dmu)}{\w^\top (q\Sigma_+ + (1-q)\Sigma_-) \w},
\end{equation}
with $q = N_+/N$ the fraction of samples in a dataset that belong to the positive class and $N=N_++N_-$ the total number of samples in a class. The solution of \eqref{eq:objectiveMILDA} is given by: 
\begin{align}
    \w^* &\propto \Sh^{-1}(\Dmu), \label{eq:LDA_MILDAObjective}\\  
    \Sh &\triangleq q\Sigma_+ + (1-q)\Sigma_-. \label{eq:ShDefinition}
\end{align}
Note that \eqref{eq:LDA_MILDAObjective} is identical to the original LDA solution \eqref{eq:LDA} if either $\Sigma_+ \propto \Sigma_-$ (e.g. when the classes are homoscedastic) or if $q=0.5$, meaning that the classes are balanced. When this is not the case, MILDA will put more weight on minimising the class variance of the class containing more samples compared to the class containing fewer samples than LDA. This could theoretically be beneficial when little data are available, as in that case, the covariance matrices are sensitive to estimation errors. However, in practice we found no noteworthy difference in performance between \eqref{eq:LDA_MILDAObjective} and \eqref{eq:LDA}, as demonstrated in Section \ref{sec:experiments}.

\subsection{Preliminaries to MILDA}
\label{subsec:LDAwoLabels}
The projection from \eqref{eq:LDA_MILDAObjective} requires knowledge of $\Sigma_+$, $\Sigma_-$, the class difference $\Dmu$, and $q$. In practice, these parameters can usually only be estimated on labelled data in a supervised setting. However, in this section we will show that the solution \eqref{eq:LDA_MILDAObjective} can be computed without labels if the two class averages are proportional\footnote{We use $\Vec{x}\sim \Vec{y}$ to denote that either $\Vec{x}=\alpha \Vec{y}$ or $\Vec{y}=\beta \Vec{x}$ with $\alpha, \beta\in \R$, whereas we use $\Vec{x}\propto \Vec{y}$ to denote that $\Vec{x}=\alpha \Vec{y}$ with $\alpha\in\R\backslash\{0\}$. In the former case, it is allowed that either $\Vec{x}$ or $\Vec{y}$ is the zero vector, whereas in the latter both vectors should be non-zero.}, i.e., $\mpos \sim \mneg$. While the latter assumption may seem artificial and overly strict at first sight, we will show in Section \ref{subsec:Transformations} that it can be relaxed to more realistic conditions based on knowledge of some ground-truth statistics.  

\begin{lemma}
\label{lemma:MILDA_pure}
If $\mpos \sim \mneg$, then the solution of \eqref{eq:objectiveMILDA} can be computed as
\begin{align}
    &\w^* \propto \Sb^{-1}\mb \label{eq:MILDA_pure}\\
    &\text{with} \mb = \frac{1}{N}\sum_{i=1}^N \Vec{x}_i\label{eq:mb}\\
    &\phantom{\text{with}} \Sb = \frac{1}{N}\sum_{i=1}^N(\Vec{x}_i-\mb)(\Vec{x}_i-\mb)^\top \label{eq:SbOriginal}
\end{align}
\end{lemma}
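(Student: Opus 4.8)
The plan is to reduce the label-agnostic expression $\Sb^{-1}\mb$ to the MILDA solution already established in \eqref{eq:LDA_MILDAObjective}, namely $\Sh^{-1}(\Dmu)$, by relating the global mean and covariance to their class-wise counterparts. First I would split the global mean over the two classes to get $\mb = q\mpos + (1-q)\mneg$ with $q=N_+/N$. Next I would apply the standard within/between scatter decomposition to $\Sb$: inserting $\pm\mpos$ (resp. $\pm\mneg$) inside each per-class sum and using that the within-class deviations sum to zero, the cross terms cancel and one obtains $\Sb = q\Sigma_+ + (1-q)\Sigma_- + S_B = \Sh + S_B$, where $S_B = q(\mpos-\mb)(\mpos-\mb)^\top + (1-q)(\mneg-\mb)(\mneg-\mb)^\top$ is the between-class scatter.

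The next step is to evaluate $S_B$ explicitly. Substituting $\mb = q\mpos + (1-q)\mneg$ gives $\mpos-\mb = (1-q)(\Dmu)$ and $\mneg-\mb = -q(\Dmu)$, so the two rank-one terms collapse into a single rank-one matrix, yielding the key identity
\begin{equation*}
\Sb = \Sh + q(1-q)(\Dmu)(\Dmu)^\top .
\end{equation*}
This identity is the structural heart of the lemma: it shows that $\Sb$ differs from $\Sh$ only by a rank-one correction along the direction $\Dmu$.

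I would then invoke the hypothesis $\mpos \sim \mneg$. Since $\mpos$ and $\mneg$ are collinear, both $\mb = q\mpos + (1-q)\mneg$ and $\Dmu = \mpos-\mneg$ lie in the same one-dimensional subspace, so (away from the degenerate case where either vanishes) $\mb \propto \Dmu$. It then remains to show $\Sb^{-1}(\Dmu) \propto \Sh^{-1}(\Dmu)$. Applying the Sherman--Morrison formula to the rank-one update above gives $\Sb^{-1}(\Dmu) = \frac{1}{1+q(1-q)(\Dmu)^\top\Sh^{-1}(\Dmu)}\Sh^{-1}(\Dmu)$; the prefactor is a nonzero scalar, so $\Sb^{-1}(\Dmu) \propto \Sh^{-1}(\Dmu)$. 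Combining this with $\mb \propto \Dmu$ yields $\Sb^{-1}\mb \propto \Sh^{-1}(\Dmu) \propto \w^*$ by \eqref{eq:LDA_MILDAObjective}, which is the claim.

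The main obstacle I anticipate is getting the between-class scatter to collapse cleanly: the argument hinges entirely on the fact that $\mpos-\mb$ and $\mneg-\mb$ are both scalar multiples of $\Dmu$, so that $S_B$ is rank one and aligned with $\Dmu$. Once that is established, the Sherman--Morrison step is routine, since a rank-one perturbation in the direction $\Dmu$ cannot change the direction of $\Sh^{-1}(\Dmu)$, it can only rescale it. I would also briefly check the edge cases (one class mean equal to the zero vector, or the degenerate $\mpos = \mneg$) to confirm that the collinearity argument still delivers $\mb \propto \Dmu$ whenever the projection is well-defined.
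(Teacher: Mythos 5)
Your proposal is correct and follows essentially the same route as the paper's proof: decompose $\Sb$ as $\Sh$ plus a rank-one term and apply Sherman--Morrison to conclude that the rank-one update only rescales $\Sh^{-1}(\Dmu)$. The one (minor but arguably cleaner) organizational difference is that you establish the identity $\Sb = \Sh + q(1-q)(\Dmu)(\Dmu)^\top$ without invoking $\mpos\sim\mneg$ at all and use the collinearity hypothesis only to obtain $\mb\propto\Dmu$, whereas the paper parametrizes $\mneg=\alpha\mpos$ up front and writes the rank-one term as $\gamma\mpos\mpos^\top$ (the assumption-free identity appears there only later, in the proof of Lemma~\ref{lemma:covTrans}); like the paper, you should note explicitly that the non-degeneracy condition $\mb\neq\Vec{0}$ (i.e. $q+(1-q)\alpha\neq 0$) is needed for the final proportionality to be meaningful.
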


\begin{proof} 
\label{subsubsec:proofMILDA}
With some algebraic manipulations, we can rewrite the covariance matrix $\Sb$ in \eqref{eq:SbOriginal} and class covariance matrices $\Sigma_+$ and $\Sigma_-$ in \eqref{eq:SposOriginal} as:

\begin{align}
    \Sb         &= \frac{1}{N}      \sum_{i=1}^N \Vec{x}_i\Vec{x}_i^\top -\mb\mb^\top \label{eq:Sb}\\
    \Sigma_+    &= \frac{1}{qN}     \sum_{i\in\C_+} \Vec{x}_i\Vec{x}_i^\top -\mpos\mpos^\top \label{eq:Spos}\\
    \Sigma_-    &= \frac{1}{(1-q)N} \sum_{i\in\C_-} \Vec{x}_i\Vec{x}_i^\top -\mneg\mneg^\top \label{eq:Smin}.
\end{align}
Based on \eqref{eq:ShDefinition}, \eqref{eq:Sb}-\eqref{eq:Smin}, it can be verified that: 
\begin{align}
    \Sb &= q(\Sigma_+ + \mpos\mpos^\top) + (1-q)(\Sigma_- + \mneg\mneg^\top)-\mb\mb^\top \nonumber\\
        &= \Sh + q\mpos\mpos^\top + (1-q)\mneg\mneg^\top-\mb\mb^\top. \label{eq:SbIFOSpm}
\end{align}
Furthermore, from \eqref{eq:mupm} and \eqref{eq:mb}, we can also write

\begin{equation}
        \mb = q\mpos + (1-q)\mneg. \label{eq:mbIFOmpm}\\
\end{equation}
Since $\mpos \sim \mneg$, we define\footnote{If $\mpos=\Vec{0}$, the roles of $\mpos$ and $\mneg$ should be switched in the proof.} $\mneg = \alpha\mpos$ with $\alpha \in \R\backslash\{1\}$. Note that $\alpha=1$ is excluded, since this would imply that the mean of both classes coincides, in which case \eqref{eq:objectiveMILDA} is ill-defined. By substituting $\mneg = \alpha\mpos$ in \eqref{eq:SbIFOSpm} and \eqref{eq:mbIFOmpm}, we can simplify $\mb$ and $\Sb$ as: 

\begin{alignat*}{2}
    \mb &= (q+(1-q)\alpha)\mpos &&\triangleq \beta\mpos\\
    \Sb &= \Sh + (q + (1-q)\alpha^2 - \beta^2) \mpos\mpos^\top &&\triangleq \Sh + \gamma\mpos\mpos^\top,
\end{alignat*}
with $\gamma=q(1-q)(\alpha-1)^2>0$.

When plugging this in the right-hand side of \eqref{eq:MILDA_pure}, we get.
\begin{align}
    \w^*  &\propto \Sb^{-1}\mb \nonumber\\
        &\propto (\Sh + \gamma\mpos\mpos^\top)^{-1}\mpos, \label{eq:wBeforeWoodbury}
\end{align}
where $\beta$ is absorbed in the $\propto$ sign.

We will now show that \eqref{eq:wBeforeWoodbury} is indeed a solution of \eqref{eq:objectiveMILDA}. First, the matrix inverse in this expression can be rewritten using the Sherman-Morrison identity, which is a special case of the Woodbury identity for rank-1 updates \cite{Woodbury1950}. This allows us to write \eqref{eq:wBeforeWoodbury} as: 
\begin{align}
    \w^* &\propto \left(\Sh^{-1} - \frac{\gamma\Sh^{-1}\mpos\mpos^\top\Sh^{-1}}{1+\gamma\mpos^\top\Sh^{-1}\mpos}\right)\mpos \nonumber\\
       &\propto \Sh^{-1}\mpos -\frac{\gamma\mpos^\top\Sh^{-1}\mpos}{1+\gamma\mpos^\top\Sh^{-1}\mpos}\Sh^{-1}\mpos \nonumber\\
       &\propto \Sh^{-1}\mpos \label{eq:w=Sh^-1mpos},
\end{align}
where the irrelevant scaling factor between the left- and right-hand side were absorbed in the $\propto$ sign.

Finally, since $\mpos - \mneg = (1-\alpha)\mpos \propto \mpos$, we can rewrite \eqref{eq:w=Sh^-1mpos} as:
\begin{equation*}
    \w^* \propto \Sh^{-1}(\Dmu).
\end{equation*}
This results in \eqref{eq:LDA_MILDAObjective}, which is a solution of \eqref{eq:objectiveMILDA}, proving the lemma. 
\end{proof}
As stated before, the vector $\w^*$ in \eqref{eq:LDA_MILDAObjective} or \eqref{eq:MILDA_pure} is proportional to Fisher's LDA defined in \eqref{eq:LDA} if the classes are balanced ($q=0.5$) or if $\Sigma_+ \propto \Sigma_-$. Otherwise, this vector optimises the objective formulated in \eqref{eq:objectiveMILDA}.

\subsection{The MILDA framework}
\label{subsec:Transformations}
Section \ref{subsec:LDAwoLabels} showed that the LDA projection vector can be computed using the label-free expression in \eqref{eq:MILDA_pure} if the two class averages are proportional to each other or if either class is zero-mean, i.e. if $\mpos \sim \mneg$. However, in practice this rather artificial assumption is rarely satisfied. Fortunately, it is possible to satisfy this assumption by transforming the data using one out of the three possible ground-truth statistics: the class average of a single class, the relative difference between the class averages (up to a scaling) or the covariances for each of the two classes (up to a scaling). The latter case also assumes either balanced classes (or knowledge of the class imbalance), or a relaxed homoscedasticity condition (i.e. $\Sigma_+ \propto \Sigma_-$). We will go over these cases one by one and derive the required transformations. We will denote the transformed data and its corresponding statistics using the notation $(.)'$, i.e., the mean of the positive class after transformation is denoted as $\bm{\mu}_+'$. All transformations are summarised in Table \ref{tab:MILDAPreProc} for convenience of the reader. The MILDA algorithm,  which is summarised in Algorithm \ref{alg:MILDA}, then simply requires applying the corresponding transformation on the data and then computing \eqref{eq:MILDA_pure}.

\subsubsection{The class average of a single class is known}
Assume without loss of generality that the class average $\mpos$ is known. In this case, you can guarantee that $\mpos \sim \mneg$ after transforming each feature vector $\Vec{x}_i$ to:
\begin{equation}
    \label{eq:transClassKnown}
    \Vec{x}'_i \longleftarrow \Vec{x}_i - \mpos.
\end{equation}
Indeed, in this case $\bm{\mu}'_+ =\Vec{0}$. Therefore $\bm{\mu}'_+ = 0\bm{\mu}'_-$, making $\bm{\mu}'_+ \sim \bm{\mu}'_-$ trivially true. The new global average $\mb'=\mb - \mpos \neq \Vec{0}$ as long as $\mneg \neq \mpos$, which ensures that the MILDA transformation from \eqref{eq:MILDA_pure} has a solution. If $\mneg = \mpos$, the two classes are not separable with either LDA or MILDA and another classifier should be used. 

\subsubsection{The relative difference between the class averages is known}
In this case, $\mpos-\mneg = \delta\Vec{d}$, with $\delta \in \R$ an unknown scalar and $\Vec{d}$ a known vector. It is thus unknown by how much the classes are separated. We can guarantee that $\mpos \sim \mneg$ by transforming each data sample $\Vec{x}_i$ to:

\begin{equation}
\label{eq:RelDifTransformation}
    \Vec{x}'_i \longleftarrow \Vec{x}_i - (\mb - \alpha\Vec{d}), \alpha \neq 0.
\end{equation}
Though parameter $\alpha\neq0$ can be freely chosen, it is best to choose $\alpha$ such that $\|\alpha \Vec{d}\| \approx \|\Vec{x}_i\|$ to avoid ill-conditioning. 

After the transformation and using \eqref{eq:mbIFOmpm}, we can write the new class averages as: 
\begin{align*}
    \bm{\mu}'_+  &= \mpos - \mb + \alpha\Vec{d} = (1-q)(\mpos - \mneg) + \alpha\Vec{d}\\ 
            &= ((1-q)\delta+\alpha)\Vec{d}\\
    \bm{\mu}'_-  &= \mneg - \mb + \alpha\Vec{d} = -q\mpos + q\mneg+\alpha\Vec{d}\\
            &=(-q\delta+\alpha)\Vec{d}.
\end{align*}
 This shows indeed that $\bm{\mu}_+'\sim\bm{\mu}_-'$. Furthermore, using \eqref{eq:mbIFOmpm} for the transformed data, we see that $\mb' \neq \Vec{0}$ as long as $\alpha \neq 0$.
 
\subsubsection{The covariances for each of the two classes are known}
\label{subsubsec:directionClasses}
In this section, we assume that the covariances $\Sigma_+$ and $\Sigma_-$ are known (up to a scaling), i.e., we assume the availability of the matrices $S_+ =\beta \Sigma_+$ and $S_- = \beta  \Sigma_-$  with $\beta$ an unknown scalar. In addition, we assume that the classes are balanced or that the class proportions (i.e. $q$) are known in case they are unbalanced. We will later see that this latter assumption can be removed if a relaxed homoscedasticity condition (i.e. $\Sigma_+ \propto \Sigma_-$) is satisfied.

We propose the following double transformation, which will later be shown (see Lemma \ref{lemma:covTrans}) to result in $\bm{\mu}_+''\sim\bm{\mu}_-''$:

\begin{align}
    &\Vec{x}_i' \longleftarrow \hat{S}^{-1/2}\Vec{x}_i \label{eq:covCorrectionFirstT}\\
    &\Vec{x}_i'' \longleftarrow \Vec{x}_i' - (\mb'-\alpha\Vec{d}), \alpha \neq 0, \label{eq:covCorrection}\\
    &\text{with}\quad \hat{S} = qS_+ + (1-q)S_-\label{eq:Shoedje}\propto\Sh\\
    &\phantom{\text{with}}\quad \mb' = \frac{1}{N}\sum_{i=1}^N\Vec{x}_i'\nonumber\\ 
    &\phantom{\text{with}}\quad \Sb' = \frac{1}{N}\sum_{i=1}^N\Vec{x}_i'\Vec{x}_i'^\top-\mb'\mb'^\top\nonumber\\
    &\phantom{\text{with}}\quad \Sb'\Vec{d} = \lambda_{max}\Vec{d}, \nonumber
\end{align}
where $\lambda_{max}$ is the largest eigenvalue of $\Sb'$. In other words, we first transform the data via \eqref{eq:covCorrectionFirstT}. Then, we compute the eigenvector $\Vec{d}$ that corresponds to the largest eigenvalue $\lambda_{max}$ from the new covariance matrix $\Sb'$. In Lemma \ref{lemma:covTrans} below, we show that this vector $\Vec{d}$ has the same direction as $\bm{\mu}'_+-\bm{\mu}'_-$. Therefore, once this direction is known, we can apply the same transformation as \eqref{eq:RelDifTransformation} to the data $\Vec{x}_i$, resulting in \eqref{eq:covCorrection}. 

\begin{lemma}
\label{lemma:covTrans}
Let $\Sb$ be defined as in \eqref{eq:Sb}, and let $\hat{S}$ be defined as in \eqref{eq:Shoedje}. If $\Vec{d}$ is the eigenvector of $\Sb'=\hat{S}^{-1/2}\Sb\hat{S}^{-1/2}$ with the largest eigenvalue, then:
\begin{equation*}
    \Vec{d} \propto \hat{S}^{-1/2}(\mpos-\mneg) = \bm{\mu}'_+-\bm{\mu}'_-.
\end{equation*}
\end{lemma}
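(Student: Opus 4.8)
The plan is to reduce $\Sb'$ to the sum of a scaled identity and a rank-one matrix, whose dominant eigenvector is manifestly the rank-one direction. The starting point is the identity \eqref{eq:SbIFOSpm}, which was obtained by purely algebraic manipulation and therefore holds \emph{without} the proportionality assumption $\mpos\sim\mneg$ used in Lemma~\ref{lemma:MILDA_pure}. Substituting $\mb=q\mpos+(1-q)\mneg$ from \eqref{eq:mbIFOmpm} into \eqref{eq:SbIFOSpm} and expanding $\mb\mb^\top$, the cross terms complete the square and all three mean-outer-products collapse to a single rank-one term, giving
\begin{equation*}
    \Sb = \Sh + q(1-q)(\mpos-\mneg)(\mpos-\mneg)^\top .
\end{equation*}
This rank-one reduction is the engine of the proof.

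Next I would whiten by $\hat{S}^{-1/2}$. Since $S_\pm=\beta\Sigma_\pm$ with $\beta>0$, \eqref{eq:Shoedje} gives $\hat{S}=\beta\Sh$, so $\hat{S}$ is positive definite, $\hat{S}^{-1/2}$ is real and symmetric, and $\hat{S}^{-1/2}\Sh\hat{S}^{-1/2}=\tfrac{1}{\beta}I$. Conjugating the displayed identity by $\hat{S}^{-1/2}$ and writing $\Vec{v}=\hat{S}^{-1/2}(\mpos-\mneg)=\bm{\mu}'_+-\bm{\mu}'_-$ then yields
\begin{equation*}
    \Sb' = \hat{S}^{-1/2}\Sb\hat{S}^{-1/2} = \tfrac{1}{\beta}I + q(1-q)\,\Vec{v}\Vec{v}^\top .
\end{equation*}

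The final step is to read off the eigenstructure. The matrix above is a scaled identity plus a positive-semidefinite rank-one perturbation, so $\Vec{v}$ is an eigenvector with eigenvalue $\tfrac{1}{\beta}+q(1-q)\|\Vec{v}\|^2$, while every vector orthogonal to $\Vec{v}$ has eigenvalue $\tfrac{1}{\beta}$. Because $0<q<1$ and $\mpos\neq\mneg$ (otherwise the classes are not separable and the problem is ill-posed), we have $q(1-q)\|\Vec{v}\|^2>0$, so the eigenvalue attached to $\Vec{v}$ is the strict maximum. Hence the top eigenvector satisfies $\Vec{d}\propto\Vec{v}=\hat{S}^{-1/2}(\mpos-\mneg)=\bm{\mu}'_+-\bm{\mu}'_-$, which is the claim.

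I expect the only genuine subtlety to be the rank-one reduction in the first step: one must check that the three mean-outer-product terms in \eqref{eq:SbIFOSpm}, after substituting $\mb$, combine exactly into $q(1-q)(\mpos-\mneg)(\mpos-\mneg)^\top$ with a strictly positive coefficient, and crucially that this holds in full generality rather than only under the proportionality assumption of Lemma~\ref{lemma:MILDA_pure}. Everything afterward is a standard scaled-identity-plus-rank-one eigenvalue argument; the one thing to keep in mind is the positivity of $\beta$, needed so that $\hat{S}^{-1/2}$ is well defined and the rank-one eigenvalue strictly dominates the rest of the spectrum.
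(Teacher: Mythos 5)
Your proof is correct and follows essentially the same route as the paper's: the rank-one reduction $\Sb = \Sh + q(1-q)(\mpos-\mneg)(\mpos-\mneg)^\top$ obtained by substituting \eqref{eq:mbIFOmpm} into \eqref{eq:SbIFOSpm}, whitening by $\hat{S}^{-1/2}$ to get a scaled identity plus a rank-one term, and reading off the dominant eigenvector. Your explicit attention to the positivity of the rank-one coefficient and of $\beta$ is a slight sharpening of the paper's argument, which leaves these points implicit, but the substance is identical.
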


\begin{proof}
First, we will compute the effect of the first transformation on the covariance matrix. Substituting \eqref{eq:mbIFOmpm} in \eqref{eq:SbIFOSpm}, we obtain

\begin{align}
    \Sb &= \Sh + q(1-q)\left(\mpos\mpos^\top +\mneg\mneg^\top - 2\mpos\mneg^\top\right)\nonumber \\
        &= \Sh + q(1-q)(\mpos-\mneg)(\mpos-\mneg)^\top.
\label{eq:Sigma1Rank1}
\end{align}
After a transformation $\Vec{x}'_i \longleftarrow A\Vec{x}_i$, the new average and covariance are respectively $\mb' = A\mb$ and $\Sb' = A \Sb A^\top$. In this case, \eqref{eq:Sigma1Rank1} becomes: 

\begin{align}
    \Sb'&= A \Sh A^\top + q(1-q)(A\mpos-A\mneg)(A\mpos-A\mneg)^\top.\nonumber\\
        &\triangleq A\Sh A^\top + q(1-q)(\bm{\mu}'_+-\bm{\mu}'_-)(\bm{\mu}'_+-\bm{\mu}'_-)^\top. \label{eq:thirdTransSb'}
\end{align}
Since $\hat{S}$ is the sum of two symmetric positive definite matrices, it is also symmetric positive definite. Therefore, a symmetric matrix $\hat{S}^{-1/2}$ exists. From \eqref{eq:Shoedje}, we know that $\hat{S}^{-1/2}\propto\Sh^{-1/2}$. By defining $A=\hat{S}^{-1/2}$, \eqref{eq:thirdTransSb'} becomes: 

\begin{equation}
    \Sb' = \beta^{-1}I + q(1-q)(\bm{\mu}'_+-\bm{\mu}'_-)(\bm{\mu}'_+-\bm{\mu}'_-)^\top, 
    \label{eq:thirdTransSb'asI+muDelta}
\end{equation}
where $\beta$ is an unknown scalar due to the proportionality $\hat{S}= \beta\Sh$. This shows that the transformation $\Vec{x}_i' \longleftarrow \hat{S}^{-1/2}\Vec{x}_i$ results in a covariance matrix that is equal to a rank-1 update on a scaled identity matrix. Note that if $\Sigma_+ \propto \Sigma_- \propto S$, the first transformation is proportional to $\Vec{x}_i' \longleftarrow S^{-1/2}\Vec{x}_i$, independent of $q$. The class imbalance $q$ must thus only be known if $\Sigma_+ \not\propto \Sigma_-$.

Finally, we need to show that the vector $\Vec{d}\propto\bm{\mu}'_+-\bm{\mu}'_-$ is indeed the eigenvector of $\Sb'$ with the largest eigenvalue. By construction, the rank-1 matrix $B = q(1-q)(\bm{\mu}'_+-\bm{\mu}'_-)(\bm{\mu}'_+-\bm{\mu}'_-)^\top$ has only one eigenvector $\Vec{d}\propto\bm{\mu}'_+-\bm{\mu}'_-$ with a non-zero eigenvalue $\lambda_{B,1} = q(1-q)(\bm{\mu}'_+-\bm{\mu}'_-)^\top(\bm{\mu}'_+-\bm{\mu}'_-)>0$. Note that $\Sb'$ is the matrix obtained by adding a scaled identity matrix $\beta^{-1} I$ to $B$. The impact of adding  $\beta^{-1} I$ is that all eigenvalues are incremented with $\beta^{-1}$, while the eigenvectors remain unchanged. Therefore, the eigenvector $\Vec{d}$ will still be the eigenvector of $\Sb'$ corresponding to the largest eigenvalue.
\end{proof}

\begin{algorithm}
\KwData{$N$ samples $\Vec{x}_i \in \R^{D\times1}$}
\caption{A summary of the MILDA algorithm}
\KwResult{Projection vector $\w \in \R^{D\times1}$}
$\Vec{x}_i' \longleftarrow f(\Vec{x}_i)$ as defined in Table \ref{tab:MILDAPreProc}\\
$\mb = \frac{1}{N}\sum_{i=1}^N \Vec{x}_i$,\\
$\Sb =  \frac{1}{N}\sum_{i=1}^N \Vec{x}_i\Vec{x}_i^\top - \mb\mb^\top$\\
$\w \propto \Sb^{-1}\mb$
\label{alg:MILDA}
\end{algorithm}

\begin{table}[h]
    \centering
    \begin{tabular}{l!{\color{black}\vrule}l}
         \textbf{Known}  & \textbf{Transformation} $f(\Vec{x}_i)$ \\
         \hline
         1) Class average $\mpos$ & $\Vec{x}_i' \longleftarrow \Vec{x}_i - \mpos$\\
         \arrayrulecolor{lightgray}\hline
         2) $\Vec{d} \propto \mpos-\mneg$ & $\Vec{x}'_i \longleftarrow \Vec{x}_i - (\mb - \alpha\Vec{d}), \alpha \neq 0$\\
         \arrayrulecolor{lightgray}\hline
         3) $S_+=\beta\Sigma_+,\ S_-=\beta\Sigma_-$ & $\Vec{x}_i' \longleftarrow (qS_+ + (1-q)S_-)^{-1/2}\Vec{x}_i$\\
         \phantom{3)} and $q$ &$\Vec{x}_i'' \longleftarrow \Vec{x}_i' - (\mb'-\alpha\Vec{d}), \alpha \neq 0$,\\
         \arrayrulecolor{lightgray}\hline
         4) $S \propto \Sigma_+ \propto \Sigma_-$ & $\Vec{x}_i' \longleftarrow S^{-1/2}\Vec{x}_i$\\
         &$\Vec{x}_i'' \longleftarrow \Vec{x}_i' - (\mb'-\alpha\Vec{d}), \alpha \neq 0$,\\
         &with $\Vec{d}$ defined in \eqref{eq:covCorrection}.
    \end{tabular}
    \caption{A summary of the transformations required given the known piece of information.}
    \label{tab:MILDAPreProc}
\end{table}

\section{Sensitivity to errors in the prior knowledge}
\label{sec:sensitivity}
Until now, we have always assumed that MILDA has access to the exact ground-truth statistic for each of the 4 cases in Table \ref{tab:MILDAPreProc}. In practice, often only an estimate of this statistic is available. In this section, we will therefore study how these approximation errors affect the projection vector from MILDA compared to the exact LDA projection vector \eqref{eq:objectiveMILDA}. More precisely, we will study these effects both in a mathematical and experimental context in Sections \ref{subsec:theoreticalError} and \ref{subsec:practicalError} respectively. Amongst others, we will show that such approximation errors matter less when classes are more difficult to separate or more imbalanced, which are precisely the cases that are the hardest for unsupervised models to tackle.

\subsection{Mathematical sensitivity analysis}
\label{subsec:theoreticalError}
Equation \eqref{eq:MILDA_pure} states that MILDA computes the projection vector:
\begin{equation}
    \Vec{w}_{MILDA} \propto \Sb^{-1}\mb, \label{eq:MILDA_pure_Rep}
\end{equation}
with $\mb,\ \Sb$ defined as in \eqref{eq:mb},\eqref{eq:SbOriginal}. Furthermore, from \eqref{eq:Sigma1Rank1}, we know that
\begin{align}
    \Sb &= \Sh + q(1-q)(\mpos - \mneg)(\mpos - \mneg)^\top \nonumber\\
        &\triangleq \Sh + q(1-q)\mD\mD^\top, \label{eq:mathErrorSb}
\end{align}
where we introduced a new notation $\mD \triangleq \mpos - \mneg$ for notational convenience.

Using \eqref{eq:mathErrorSb} in \eqref{eq:MILDA_pure_Rep}, applying the Sherman-Morrison identity \cite{Woodbury1950} and rearranging the order of the terms gives:
\begin{align}
    \Vec{w}_{MILDA} &\propto (\Sh^{-1} - \frac{q(1-q)\Sh^{-1}\mD\mD^\top\Sh^{-1}}{1+q(1-q)\mD^\top\Sh^{-1}\mD})\mb \nonumber\\
    & = \Sh^{-1}\mb - \frac{q(1-q)\mD^\top\Sh^{-1}\mb}{1+q(1-q)\mD^\top\Sh^{-1}\mD}\Sh^{-1}\mD \nonumber\\
    &= \Sh^{-1}(\mb + \phi\mD), \label{eq:wIFOmbandMd}\\
    &\text{with } \phi \triangleq - \frac{q(1-q)\mD^\top\Sh^{-1}\mb}{1+q(1-q)\mD^\top\Sh^{-1}\mD}. \nonumber
\end{align}
When comparing \eqref{eq:wIFOmbandMd} with \eqref{eq:LDA_MILDAObjective}, we find that MILDA optimises \eqref{eq:objectiveMILDA} if and only if $\mb$ is proportional to $\mD$. It can be verified that if $\mpos \sim \mneg$, then $\mb \sim \mD$. This indeed confirms Lemma \ref{lemma:MILDA_pure}, and this is what the different transformations in Section \ref{subsec:Transformations} aim to achieve. 

Let us now consider the case where the assumption $\mpos \sim \mneg$ (and thus $\mb \sim \mD$) is not (perfectly) satisfied. This happens for example if the transformations in Section \ref{subsec:Transformations} are computed based on approximations or estimates of the ground-truth statistics in the left column. In this case, we decompose $\mb$ into a component proportional to $\mD$ and a component orthogonal to $\mD$, i.e., $\mb = \delta\mD + \Vec{e}$, with $\delta \in \R$ a scaling term and $\Vec{e}$ an error vector orthogonal to $\mD$. $\Vec{w}_{MILDA}$ can then be written as:

\begin{equation}
    \Vec{w}_{MILDA} \propto \Sh^{-1}\left((\phi+\delta)\mD + \Vec{e}\right). \label{eq:wMildaNoAssumption}
\end{equation}

Depending on the statistics of the data, the same error does not always affect the projection vector $\Vec{w}$ equally. The error will have a small effect on $\Vec{w}_{MILDA}$ when the angle between $\Sh^{-1}\mD$ and $\Sh^{-1}\Vec{e}$ is small and/or when $\frac{1}{|\phi+\delta|}\frac{\|\Vec{e}\|_2}{\|\mD\|_2}$ is small. By using the definition of $\phi$ in \eqref{eq:wIFOmbandMd}, we can further expand the latter expression as: 

\begin{align}
    &\frac{1}{|\phi+\delta|}\frac{\|\Vec{e}\|_2}{\|\mD\|_2}\nonumber\\
    & = \frac{1}{\left|1-\frac{q(1-q)\mD^\top\Sh^{-1}(\mD+\frac{\Vec{e}}{\delta})}{1+q(1-q)\mD^\top\Sh^{-1}\mD}\right|} \frac{\|\Vec{e}\|_2}{\delta\|\mD\|_2} \nonumber\\
    & = \left|\frac{1+q(1-q)\mD^\top\Sh^{-1}\mD}{1-q(1-q)\mD^\top\Sh^{-1}\frac{\Vec{e}}{\delta}}\right| 
    \frac{\|\Vec{e}\|_2}{\|\mb-\Vec{e}\|_2}. \label{eq:sens3simplified}
\end{align}
Using \eqref{eq:wMildaNoAssumption} and \eqref{eq:sens3simplified}, we can conclude that the following properties reduce the sensitivity of MILDA to errors in the ground-truth statistics: 
\begin{itemize}
    \item Property 1: $\Sh^{-1}\Vec{e}$ and $\Sh^{-1}\mD$ form a small angle. Since $\Vec{e}$ and $\mD$ are orthogonal (by construction), more skewed class covariances can reduce the impact of the error (although this is not guaranteed). Note that the third and fourth transformations of Table \ref{tab:MILDAPreProc} first transform the data such that $\Sh=I$. For these transformations, $\Sh^{-1}\mD$ will thus always be orthogonal to $\Sh^{-1}\Vec{e}$. 
    \item Property 2: $\|\Vec{e}\|_2$ is small relative to $\|\mb\|_2$. This is trivial: the error matters less when it is smaller. The worst case is when $\mD \perp \mb$, since in this case $\mb=\Vec{e}$, such that $\w_{MILDA}\perp \w_{LDA}$ if $\Sh=I$. From \eqref{eq:mbIFOmpm}, we see that $\mD^\top\mb=q\|\mpos\|_2^2+(1-2q)\mpos^\top\mneg-(1-q)\|\mneg\|_2^2$. Since $\mD^\top\mb=0$ if $\mD \perp \mb$, this shows that this worst-case scenario does not occur when the class average of the class with the most samples is much larger than the other class average. If the classes are approximately balanced, it suffices that the norm of any one of the two class averages is much larger than the other. 
    \item Property 3: There is a large class imbalance. Indeed, if $q \approx 0$ or $q \approx 1$, the first term in \eqref{eq:sens3simplified} becomes minimal (i.e. close to 1), therefore reducing the sensitivity to errors. 
    \item Property 4: $\mD^\top\Sigma^{-1}\mD$ is small. This also minimises the first term in \eqref{eq:sens3simplified}. Interestingly, this implies that the error will have a reduced effect on $\w_{MILDA}$ when there is a lot of overlap between the two classes. MILDA has thus the nice property that approximation errors matter less when the classes are harder to separate, which are exactly the cases where fully unsupervised methods do not perform well. 
\end{itemize}
Beware that, in some cases, the beneficial effect of the latter two properties can be counteracted by other detrimental effects. This happens in the following cases.
\begin{itemize}
    \item A large class imbalance will increase the error after transformation 1 from Table \ref{tab:MILDAPreProc} (1 class average known) if the unknown class average is also the class with the least samples. Indeed, after the transformation the class with the most samples will be close to zero-mean resulting in a small $\mb'$, which directly counteracts the second aforementioned property. 
    \item Both a large class imbalance and a worse separability make it harder to correctly estimate $\Vec{d}$ when using transformations 3 or 4 from Table \ref{tab:MILDAPreProc} (the covariance methods), since both will reduce the impact of the rank-1 update on the overall covariance matrix in \eqref{eq:thirdTransSb'asI+muDelta}. 
\end{itemize}

\subsection{Experimental sensitivity analysis}
\label{subsec:practicalError}
\begin{figure*}[ht]
\begin{minipage}{\linewidth}
\begin{subfigure}[b]{\textwidth}
\centering
    \resizebox{0.6\columnwidth}{!}{
\begin{tikzpicture}
\begin{axis}[%
at={(0\textwidth,0\textwidth)},
clip mode=individual,
scale only axis,
 axis equal image,
tick align=inside,
x axis line style= {yshift=0.35cm, gray},
axis y line = none,
x tick style={yshift = 0.2cm, gray},
x tick label style={yshift= 0.35cm,anchor=south,font=\scriptsize\color{gray}},
axis background/.style={fill=none},
axis lines*=left,
axis on top = true,
xlabel={Angle $\alpha$ between $\w_{MILDA}$ and $\w_{LDA}$ [degrees]},
xlabel style={anchor=north,font=\small\color{gray}},
xmin=0,
xmax=90,
ymin=0,
ymax=10,
xtick={0,15,30,45,60,75,90},
]
\addplot graphics[xmin=0,ymin=-2,xmax=90,ymax=2] {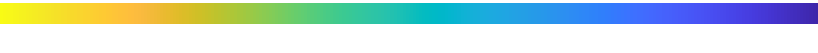};

\end{axis}
\end{tikzpicture}
}
\end{subfigure}
    \begin{subfigure}[b]{0.3\textwidth}
        \centering
        \input{Figures/SensitivityI.tex}
        \caption{$\Sh = I$}
        \label{fig:SenstivityI}
    \end{subfigure}
    \hfill
    \begin{subfigure}[b]{0.3\textwidth}
        \centering
        \input{Figures/Sensitivity5I.tex}
        \caption{$\Sh = 5I$}
        \label{fig:Senstivity5I}
    \end{subfigure}
    \hfill
    \begin{subfigure}[b]{0.3\textwidth}
        \centering
        \input{Figures/SensitivitySkew.tex}
        \caption{$\Sh = 0.3I + 0.7J$}
        \label{fig:SenstivitySkew}
        \end{subfigure}
    \caption{Heatmap representing the angle between $\w_{LDA}$ and $\w_{MILDA}$ as a function of the location of $\mpos$ and $\mneg$. The angle between the LDA and MILDA projection vector decreases when $\mneg<<\mpos$ or $\mneg>>\mpos$, when the covariance increases, and when it is less white.}
    \label{fig:Sensitivity}
\end{minipage}
\end{figure*}
To visualise the findings from Section \ref{subsec:theoreticalError}, the angle between the LDA and MILDA projection vectors is estimated in a two-dimensional space. To achieve this, we keep $\mpos$ fixed at $[0, 1]^\top$. Meanwhile, $\mneg = [x, y]^\top$ is varied in the ranges $x,y \in [-2,2]$. For each $\mneg$, we compute the projection vectors $\w_{MILDA} \propto \Sb^{-1}\mb$ and $\w_{LDA} \propto (\Sigma_+ + \Sigma_-)^{-1}(\mpos-\mneg)$, assuming balanced classes. We then measure the similarity between the two projections using the angle $\eta$ between $\w_{MILDA}$ and $\w_{LDA}$ using the formula $\cos(\eta) = \frac{\w_{LDA}^\top\w_{MILDA}}{\|\w_{LDA}\|_2\|\w_{MILDA}\|_2}$. The closer $\eta$ is to 0, the more the directions of $\w_{LDA}$ and $\w_{MILDA}$ are similar, and thus the more their respective projections are equivalent. 

This set-up allows us to simulate a wide array of possible violation of the assumption $\mpos \sim \mneg$ up to an irrelevant scaling and rotation\footnote{Scaling and rotation are irrelevant because LDA and MILDA transformations are affine invariant (proof omitted).}. To study how the class covariance influences the sensitivity of MILDA to estimation errors, we will repeat this process for three different sets of class covariances. In the first experiment, $\Sh_1 = I$. In the second experiment, $\Sh_2 = 5I$. Here, the variance is larger, making the classes less separable. In the third experiment, $\Sh_3 = 0.3I + 0.7J$, with $I$ the identity matrix and $J$ a matrix with all ones. In this case, the class covariances are very skewed, allowing us to verify that this indeed decreases the influence of the error in many cases. For each experiment, $\Sigma_+ = \Sigma_- = \Sh$ and $q=0.5$. To avoid any noise blurring the visualisation, no data was generated in this process. Instead, both the LDA and the MILDA projections were directly computed from $\Sigma_\pm$ and $\bm{\mu}_\pm$.

Figure \ref{fig:Sensitivity} confirms the insights obtained from the mathematical analysis in Section \ref{subsec:theoreticalError}. When $\Sh \propto I$ (Fig. \ref{fig:SenstivityI}), the blue circle at the positions where $\|\mpos\|\approx \|\mneg\|$ clearly shows the worst-case scenario where $\w_{MILDA} \perp \w_{LDA}$ in the area where $\mb = \Vec{e}$ (see property 2). The region where $\mpos\approx-\mneg$ is particularly interesting, as here the alignment assumption $\mpos\sim\mneg$ is quite well satisfied, yet the sensitivity of MILDA to small deviations from such perfect alignment is very high because $\mb \approx \Vec{0}$.

The ideal location for $\mneg$ to be, is either around $[0,0]^\top$ (if $\mpos$ is non-zero) or $\pm \kappa\mpos$ with $\kappa \gg 1$ In these regions, the misalignment error between $\mpos$ and $\mneg$ can be larger without having much effect on the solution.  

Finally, we also clearly observe that MILDA becomes more robust against errors as the class covariances increase (Figure \ref{fig:Senstivity5I}) or become less white (Figure \ref{fig:SenstivitySkew}), as predicted in Property 1 and Property 4 in the mathematical error analysis. This is interesting, as these are the typical scenario's where unsupervised methods suffer the most.

\section{Experiments}
\label{sec:experiments}
In this section, we demonstrate that MILDA closely matches the performance of a fully supervised LDA. In the first set of experiments, the two models are compared to each other on a vast set of simulated data with varying parameters. This allows us to accurately investigate the influence of each parameter on the model's performance. In a second experiment, we apply MILDA in the context of target detection using a sensor array. This is a common problem for e.g. radar detection, wireless communication and brain-computer interfaces \cite{Markovich2009, Gu2012, Khabbazibasmenj2012}. Finally, MILDA and LDA are compared in a third experiment on non-stationary data, highlighting how the label-free MILDA model can easily adapt to changing statistics. 

Note that the LDA and MILDA models would not be used interchangeably. LDA requires all statistics to be known, whereas MILDA is specifically designed for cases where much less information is available. Therefore, LDA should be seen as a benchmark that MILDA should ideally approach as closely as possible (except in the third experiment).

\subsection{Ablation study}
\label{subsec:AblationStudy}
\begin{table*}[t]
    \centering
    \begin{tabular}{l!{\color{black}\vrule}l!{\color{black}\vrule}l}
         \textbf{Known}                     &  \textbf{K-means update}                  & \textbf{GMM update} \\
         \hline
         1) Class average $\mpos$           &   $\bm{\mu}_1^{(i)} \longleftarrow (1-\lambda)\bm{\mu}_1^{(i)} + \lambda\mpos$                                        & $\bm{\mu}_1^{(i)} \longleftarrow (1-\lambda)\bm{\mu}_1^{(i)}+\lambda\mpos$\\
         \arrayrulecolor{lightgray}\hline
         2) $\Vec{d} \propto \mpos-\mneg$   &   $\bm{\mu}_{1/2}^{(i)} \longleftarrow \left((1-\lambda)I + \lambda\Vec{d}\Vec{d}^\top\right)(\bm{\mu}_{1/2}^{(i)}-\mb)$    & $\bm{\mu}_{1/2}^{(i)} \longleftarrow \left((1-\lambda)I + \lambda\Vec{d}\Vec{d}^\top\right)(\bm{\mu}_{1/2}^{(i)}-\mb)$\\
         \arrayrulecolor{lightgray}\hline
         3) $\Sigma_+,\ \Sigma_-$ and $q$ or  &   $\bm{\mu}_{1/2}^{(i)} \longleftarrow \bm{\mu}_{1/2}^{(i)}$                                                          & $\Sigma_{1/2}^{(i)} =  (1-\lambda)\Sigma_{1/2}^{(i)}+\lambda\Sigma_{+/-}$\\
         4) $\Sigma_+ \propto \Sigma_-$ &&
    \end{tabular}
    \caption{The updating schemes for K-means and GMM to incorporate known pieces of information as a prior, with $\lambda\in [0,1]$ and $\|\Vec{d}\|_2=1$. By default, $\lambda=0.7$ is chosen, as it empirically produced the best results. In contrast to MILDA, $\Sigma_\pm$ were used for the GMM instead of $S_\pm\propto\Sigma_\pm$ in transformations 3 and 4, since the algorithm is not invariant to scaling. Classes 1/2 are paired with classes +/- after the K-means++ initialisation such that $\|\mpos-\bm{\mu}^{(1)}_1\|_2$ is minimal when $\mpos$ is known and such that $\|\Sigma_+-\Sigma_1^{(1)}\|_F+\|\Sigma_- -\Sigma_2^{(1)}\|_F$ is minimal when $\Sigma_+,\ \Sigma_-$ are known. No explicit pairing is necessary when $\Vec{d}$ is known.}
    \label{tab:GMMKmeansUpdate}
\end{table*}

\subsubsection{Experiment design}
In this experiment, the models are validated on a simulated dataset where the two classes are sampled from two different Gaussian distributions. By default, $N=1000$ samples are sampled from the distributions using the following hyperparameters: $\mpos = [1, \Vec{0}]^\top \in \R^{D\times1}$, $\mneg = [-1,\Vec{0}]^\top \in \R^{D\times1}$, $\Sigma_+ = (1-\rho)I +\rho J$, with $I \in \R^{D\times D}$ the identity matrix and $J \in \R^{D\times D}$ a matrix of all ones. $\rho \in [0,1]$ controls how correlated the different features are ($\rho=0.3$ by default). $D=10$ is the data dimensionality. $\Sigma_-$ is computed by using the same eigenvalues and eigenvectors as $\Sigma_+$, but pairing each eigenvector with a different eigenvalue than in $\Sigma_+$. This generates two classes with the same covariance structure but rotated such that the maximal covariance directions are orthogonal. By default, both classes are balanced ($q=0.5$).

Each time, one specific parameter in the model is varied to study its influence on the behaviour of MILDA. The following parameters are studied: 
\begin{itemize}
    \item The effect of the class separation: $\mneg = [i, \Vec{0}]^\top, i\in [-3,1]$ (results shown in Fig. \ref{fig:ClassSeparation}).
    \item The effect of the correlation between the features $\rho \in [0,1]$ (results shown in Fig. \ref{fig:Correlation}).
    \item The degree to which $\Sigma_+$ and $\Sigma_-$ are similar (results shown in Fig. \ref{fig:Orthogonality}). This is controlled by choosing $\Sigma_- = (1-\sigma) \Sigma_+ + \sigma \Sigma_{-,default}, \sigma \in [0,1]$.
    \item The data dimensionality $D \in [2,100]$ (results shown in Fig. \ref{fig:Dimensions}).
    \item The influence of the class imbalance, modelled by the fraction of samples from the positive class $\C_+$ $q \in [0.01,0.99]$ (results shown in Fig. \ref{fig:ClassImbalance}).
\end{itemize}

Given that MILDA is a label-free method, we also compare it to two other popular unsupervised classification methods: K-means and Gaussian mixture models (GMMs). Where possible, these models were modified\footnote{For all experiments in this paper, we have verified that these modifications do not deteriorate the results compared to the original K-means and GMM algorithm. Using all default parameters, the accuracy of the original and adapted K-means algorithms are respectively 64\% and 84\%. The accuracy of the original and adapted GMM algorithms are respectively 66\% and 69\%.} to incorporate the same prior knowledge as MILDA to make the comparison as fair as possible. This is achieved by updating specific parameters at each iteration, following the updating procedure from Table \ref{tab:GMMKmeansUpdate}. All other design choices for K-means and GMM are identical to the default Matlab R2021a implementation. While LDA is the only true benchmark here (as this is MILDA's target performance), these additional comparisons with K-means and GMMs are added to provide some insights into the strengths and limitations of MILDA as a label-free classifier compared to traditional unsupervised methods. 

The LDA model is always trained on the ground-truth statistics that are used to generate the samples, making it the \textit{ideal} LDA model. Similarly, MILDA always uses the known ground-truth statistics to compute the transformations from Table \ref{tab:MILDAPreProc}. This reflects real-life limitations, as sample statistics of test data are generally unknown.

Each experiment is repeated 50 times for the three ground-truth statistics (one class average known, the relative difference between class averages known and the class covariances known). The results are then averaged and shown in Figures \ref{fig:ClassSeparation} to \ref{fig:ClassImbalance}. The shaded areas represent the standard deviation for each model.

\subsubsection{Class separation}
Unsurprisingly, Figure \ref{fig:ClassSeparation} shows that a better class separation improves the performance of all models. MILDA performs significantly better than K-means and GMM when the class separation is large enough (in this example when $\|\mpos - \mneg\|_2>0.1$). MILDA is slightly worse than LDA for most accuracies. The difference in their accuracy is at most 1\%, except for the case where only the class covariances are known (cases 3/4) if the class separation is small. In that case, the MILDA model is very sensitive to disparities between the ground-truth covariances and the estimated sample covariances. However, K-means and GMM models never outperform MILDA, as they suffer even more from such data with poorly separated classes. 

\subsubsection{Correlation between features}
From Figure \ref{fig:Correlation}, we again see that MILDA and LDA achieve very similar performances, no matter how correlated the features are. K-means is not significantly worse than MILDA when both class covariances are perfectly white, but quickly drops in performance when this is no longer the case. It is significantly worse than MILDA when the correlation between features is high (in this example when $\rho>0.1$). The GMM improves as features become more correlated, but never comes near the performance of LDA or MILDA. 

\subsubsection{Similarity between covariance matrices}
The degree to which $\Sigma_-$ and $\Sigma_+$ are different has no influence on either MILDA or LDA, as shown in Figure \ref{fig:Orthogonality}. MILDA thus reaches similar performances as LDA, no matter how different the class covariances are. K-means does improve as the covariances become more orthogonal. 

\subsubsection{Data dimensionality}
While the performance of K-means and GMM quickly drops with an increasing data dimensionality in Figure \ref{fig:Dimensions} due to the curse of dimensionality \cite{Bouveyron2019}, MILDA only slowly loses performance. The slight drop in performance for MILDA compared to LDA is caused by the cumulative sum of small disparities between the ground-truth and sample statistics over an increasing data dimensionality. Note that LDA has access to all ground-truth statistics and thus doesn't suffer from these estimation errors. However, in reality, its statistics are often estimated on labelled data. In this case, LDA will also suffer from the curse of dimensionality. 

\subsubsection{Class imbalance}
Although LDA and MILDA no longer optimise the same objective function when $q\neq 0.5$ and $\Sigma_+ \not\propto \Sigma_-$, as explained in Section \ref{sec:MILDA}, they still produce very similar results. This is shown in Figure \ref{fig:ClassImbalance}. MILDA is the most different from LDA in the cases where only 1 class average is known (case 1) and when there are very few samples from the other class. This is expected, as here, MILDA knows hardly anything about the second class, making it very difficult to properly estimate the projection vector, as explained in Section \ref{sec:sensitivity}. A similar effect can be seen in K-means. All models reach near 100\% accuracy when the classes are extremely imbalanced, which is simply because the chance level also reaches 100\%.  

\begin{figure*}[p]
\begin{minipage}{\linewidth}
    \begin{subfigure}[b]{0.3\textwidth}
        \centering
        \input{Figures/3Models_mu_class_separation.tex}
        \caption{$\mpos$ known}
        \label{fig:class_separation_mu}
    \end{subfigure}
    \hfill
    \begin{subfigure}[b]{0.3\textwidth}
        \centering
        \input{Figures/3Models_Delmu_class_separation.tex}
        \caption{$\Vec{d}$ known}
        \label{fig:class_separation_Delmu}
    \end{subfigure}
    \hfill
    \begin{subfigure}[b]{0.3\textwidth}
        \centering
        \input{Figures/3Models_cov_class_separation.tex}
        \caption{$\Sigma_+,\ \Sigma_-,\ q$ known}
        \label{fig:class_separation_cov}
    \end{subfigure}
    \caption{Accuracy as a function of class separation. All models improve as the class separation grows. LDA and MILDA perform almost equally well, except when only the covariance is known and the class separation is small (Figure \ref{fig:class_separation_cov}). K-means and GMM perform significantly worse for class separations above 0.1.}
    \label{fig:ClassSeparation}
\end{minipage}
\begin{minipage}{\linewidth}
    \begin{subfigure}[b]{0.3\textwidth}
        \centering
        \input{Figures/3Models_mu_correlation.tex}
        \caption{$\mpos$ known}
    \end{subfigure}
    \hfill
    \begin{subfigure}[b]{0.3\textwidth}
        \centering
        \input{Figures/3Models_Delmu_correlation.tex}
        \caption{$\Vec{d}$ known}
    \end{subfigure}
    \hfill
    \begin{subfigure}[b]{0.3\textwidth}
        \centering
        \input{Figures/3Models_cov_correlation.tex}
        \caption{$\Sigma_+,\ \Sigma_-,\ q$ known}
        \label{subfig:corrCov}
    \end{subfigure}
    \caption{Accuracy as a function of the correlation between features. If features are more correlated, the classes tend to become more separable. LDA, MILDA and GMM, therefore, improve in performance. K-means drops in performance because it implicitly assumes white class covariance matrices. LDA and MILDA perform nearly equally well for all settings, outperforming K-means and GMM.}
    \label{fig:Correlation}
\end{minipage}
\begin{minipage}{\linewidth}
    \begin{subfigure}[b]{0.3\textwidth}
        \centering
        \input{Figures/3Models_mu_orthogonality.tex}
        \caption{$\mpos$ known}
    \end{subfigure}
    \hfill
    \begin{subfigure}[b]{0.3\textwidth}
        \centering
        \input{Figures/3Models_Delmu_orthogonality.tex}
        \caption{$\Vec{d}$ known}
    \end{subfigure}
    \hfill
    \begin{subfigure}[b]{0.3\textwidth}
        \centering
        \input{Figures/3Models_cov_orthogonality.tex}
        \caption{$\Sigma_+,\ \Sigma_-,\ q$ known}
    \end{subfigure}
    \caption{Accuracy as a function of how similar the class covariances are. Whether or not the two classes have similar class covariances or not has little influence on the performance of any model. Only K-means improves noticeably as the two class covariances become more different.}
    \label{fig:Orthogonality}
\end{minipage}
\end{figure*}
\begin{figure*}[t]
\begin{minipage}{\linewidth}
    \begin{subfigure}[b]{0.3\textwidth}
        \centering
        \input{Figures/3Models_mu_dimensions.tex}
        \caption{$\mpos$ known}
    \end{subfigure}
    \hfill
    \begin{subfigure}[b]{0.3\textwidth}
        \centering
        \input{Figures/3Models_Delmu_dimensions.tex}
        \caption{$\Vec{d}$ known}
    \end{subfigure}
    \hfill
    \begin{subfigure}[b]{0.3\textwidth}
        \centering
        \input{Figures/3Models_cov_dimensions.tex}
        \caption{$\Sigma_+,\ \Sigma_-,\ q$ known}
    \end{subfigure}
    \caption{Accuracy as a function of the data dimensionality. GMM and K-means models suffer more from the curse of dimensionality than MILDA. LDA, having access to all ground-truth statistics, is unaffected.}
    \label{fig:Dimensions}
\end{minipage}
\begin{minipage}{\linewidth}
    \begin{subfigure}[b]{0.3\textwidth}
        \centering
        \input{Figures/3Models_mu_class_imbalance.tex}
        \caption{$\mpos$ known}
        \label{fig:class_imbalance_mu}
    \end{subfigure}
    \hfill
    \begin{subfigure}[b]{0.3\textwidth}
        \centering
        \input{Figures/3Models_Delmu_class_imbalance.tex}
        \caption{$\Vec{d}$ known}
        \label{fig:class_imbalance_Delmu}
    \end{subfigure}
    \hfill
    \begin{subfigure}[b]{0.3\textwidth}
        \centering
        \input{Figures/3Models_cov_class_imbalance.tex}
        \caption{$\Sigma_+,\ \Sigma_-,\ q$ known}
    \end{subfigure}
    \caption{Accuracy as a function of the class imbalance. Although MILDA and LDA optimise a slightly different objective when the classes are not balanced, their difference in performance is nearly nonexistent, no matter the class imbalance.}
    \label{fig:ClassImbalance}
\end{minipage}
\end{figure*}
\subsection{Example use case: target signal detection}
\label{Subsec:targetdetection}
\subsubsection{Experiment design} To illustrate the use of MILDA in a concrete application use case, we consider a target signal detection problem in a D-channel sensor array. Given a target signal $s(t)$, the observed signal $\Vec{x}(t) \in \R^{D\times1}$ can be modeled as $\Vec{x}(t) = s(t)\Vec{a}+\Vec{n}(t)$, with $\Vec{a}\in\R^{D\times1}$ the steering vector and $\Vec{n}(t)\in\R^{D\times1}$ the noise. All signals are assumed to be stationary. To illustrate the different ways that MILDA can be used, we propose three sub-problems: 

\begin{itemize}
    \item In the first problem, we attempt to detect whether or not the target signal is present, i.e. $s(t) \in \{0,1\}$. The steering vector $\Vec{a}$ is unknown, but assumed fixed within the observation window that contains the $N$ samples that have to be classified. Even though the general noise statistics are unknown, the sensors are calibrated such that the noise is zero-mean. One class average ($\bm{\mu}_{noise}=\Vec{0}$) is thus known, allowing us to use MILDA with transformation 1 from Table \ref{tab:MILDAPreProc}. 
    \item In the second problem, a binary signal $s(t) \in \{-1/2,1/2\}$ is sent from a single location. In this case, the signal $s(t)$ is always active (i.e., it is never zero). The steering vector $\Vec{a}$ is known (up to an arbitrary scaling) \cite{Markovich2009}. The statistics of $s(t)$ and $\Vec{n}(t)$ are assumed to be unknown. In this case, we know that the relative difference between both classes averages is proportional to the steering vector ($\mpos-\mneg\propto \Vec{a}$). We can therefore use MILDA with transformation 2 from Table \ref{tab:MILDAPreProc}.
    \item In the third problem, two different locations interchangeably transmit the signal $s(t)=1$ while the other location is silent. For each observation $\Vec{x}(t)$, we aim to detect which location transmitted the signal. The locations (and corresponding steering vectors) are unknown, similar to case 1. The noise covariance matrix $S=\E [\Vec{n}(t)\Vec{n}^\top(t)]$ is assumed to be known.  It is therefore possible to use MILDA with transformation 4 from Table \ref{tab:MILDAPreProc}.
\end{itemize}
We set $D=10$ and perform multiple Monte-Carlo runs. In each run, $N=1000$ samples are created with equal probability between the two detection classes of the corresponding experiment, i.e., $N/2$ of the samples belong to class 0, the other $N/2$ belong to class 1. The noise is modelled as $\Vec{n}(t) = \Vec{g}(t) + \Vec{f}(t)$, with Gaussian noise $\Vec{g}(t) \sim \mathcal{N}(0.3 \diag(\bm{\sigma}) + 0.7J)$ and a high-frequency interference signal $\Vec{f}(t) = \bm{\alpha}\odot\sin(\pi/2t + \bm{\phi})$. The variances $\bm{\sigma}\in\R^{D\times1}$ and amplitudes $\bm{\alpha}\in\R^{D\times1}$ are randomly sampled from the uniform distribution $\mathcal{U}_D(\Vec{0},\Vec{1})$. The phase for each dimension $\bm{\phi}\in\R^{D\times1}$ is sampled from $\mathcal{U}_D(\Vec{0},\Vec{2}\bm{\pi})$. The steering vectors are sampled from $\mathcal{U}_D(-\Vec{1},\Vec{1})$. Each experiment is repeated 100 times. All models are trained in the same way as in Section \ref{subsec:AblationStudy}. Since LDA has access to information that wouldn't be available in the described scenarios, it merely serves as a benchmark. 

This experiment is conducted with the primary objective of demonstrating the practical applicability of MILDA within a specific use-case context. The intent is not to assert superiority over state-of-the-art target detection algorithms. The experiments are purposefully simplified for clarity.  

\subsubsection{Results}
The detection accuracy for each model and each experiment is displayed in Table \ref{tab:resultSteering}. Similar to the previous experiments, LDA and MILDA achieve comparable performances and consistently outperform the GMM and K-means models. The inclusion of non-Gaussian noise also demonstrates that MILDA does indeed not require the assumption of Gaussianity to be comparable to LDA. 

\begin{table}[ht]
    \centering
    \begin{tabular}{c|c c c c}
        Problem & LDA  & MILDA  & K-means & GMM \\
        \hline
        Zero-mean noise     & 93.4 (4.4) & 92.8 (4.4)& 63.4 (5.6) & 87.5 (14.3)\\
        Binary signal       & 83.2 (4.4) & 82.9 (4.4)& 76.6 (5.9) & 64.4 (9.1)\\
        Noise covariance    & 90.3 (4.6) & 90.2 (4.6)& 64.6 (11.1)& 63.9 (10.2)\
    \end{tabular}
    \caption{The average accuracy (\%) and standard deviation between brackets for each model. MILDA performs only slightly worse than LDA, even though it requires significantly less information. K-means and GMM always perform significantly worse, even with access to the same ground-truth information as MILDA.}
    \label{tab:resultSteering}
\end{table}

\subsection{Comparison on non-stationary data}
\subsubsection{Experiment design}
Until now, all experiments were performed on stationary data. However, there is a vast amount of classification problems where the stationarity assumption does not hold. Instead, the class distributions can shift and morph over time. Ideally, the classification models should adapt to these changes. While this is generally possible for MILDA (assuming the ground-truth statistic is stationary), this is rarely the case for LDA. It rarely has access to labels to re-estimate the relevant statistics at inference time to recompute the LDA projection vector.

To build further on the first problem from Section \ref{Subsec:targetdetection}, we illustrate the effects of non-stationary data on binary target signal detection in zero-mean noise by introducing three changes in the data and tracking the accuracy over time:
\begin{itemize}
    \item The first change is a sudden change in the noise covariance from $\Sigma_1$ to $\Sigma_2$. 
    \item The second change is a \textit{gradual} drift in both the steering vector from $\Vec{a}_1$ to $\Vec{a}_2$ and the noise covariance from $\Sigma_2$ to $\Sigma_3$. 
    \item The third change is a sudden change in the steering vector from $\Vec{a}_2$ to $\Vec{a}_3$.
\end{itemize}

Similar to the first problem, $s(t) \in \{0,1\}$ and $\Vec{a} \sim \mathcal{U}_D(-\Vec{1},\Vec{1})$. The noise is modelled in the same way as in Section \ref{Subsec:targetdetection}. The signal and noise statistics are changed by respectively resampling $\Vec{a}$ and $\bm{\sigma,\ \alpha,\ \phi}$. During the gradual change, each parameter is changed by linearly interpolating between the value at the start and at the end of the gradual change. The norm of the steering vector is also changed by linearly interpolating between $\|\Vec{a}_1\|$ and $\|\Vec{a}_2\|$. This guarantees that the expected class separation is constant during the gradual shift.

We performed 1000 Monte-Carlo runs. $10 000$ samples are classified during each run, with a sudden change in noise covariance at sample $2500$, a gradual change between samples $5000$ and $7500$ and a sudden change in the steering vector at sample $7500$. MILDA, K-means and GMM are adaptively retrained using a moving window covering the last $500$ samples. LDA is trained using the original statistics $\Sigma_1$ and $\Vec{a}_1$ and thus does not adapt to the changes in statistics (as there are no labels available to update the model). Since the noise is always zero-mean, MILDA, K-means and GMM use $\bm{\mu}_{noise}=\Vec{0}$ as the stationary ground-truth statistic. 

\subsubsection{Results}
The average accuracy of the four models is shown in Figure \ref{fig:steeringVectorAdaptive}. As expected, LDA slightly outperforms MILDA on samples that are drawn from the same distribution as it was trained on. LDA and MILDA equally drop in performance when sudden changes are introduced in the statistics. However, MILDA completely recovers after $500$ samples, which is the width of the moving window MILDA was trained on. Remarkably, MILDA barely drops in performance during the gradual change in statistics (on average only $0.25\%$), showing that the model is well able to track gradual changes in statistics. This is also true for GMM and K-means, although they always perform significantly worse than MILDA ($p<1e-5$). As expected, LDA continuously drops in performance as more statistics are changed. Since it is unable to adapt due to the lack of labels, it is also unable to recover from these changes. 

\begin{figure}
    \centering
    \input{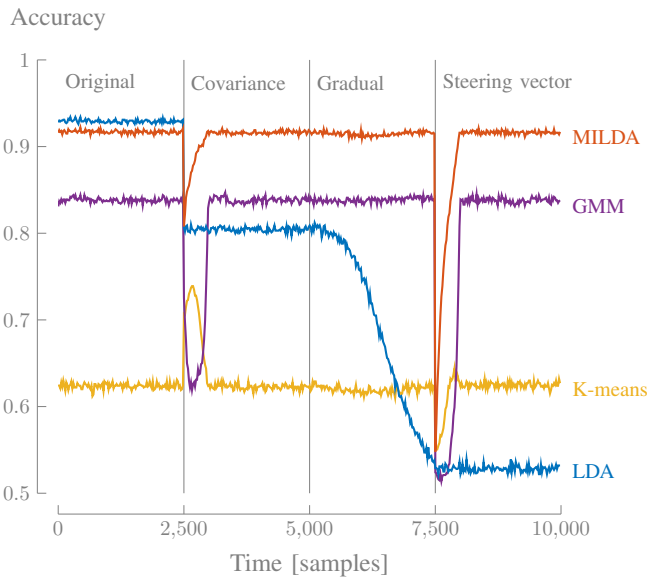}
    \caption{Comparison of LDA, MILDA, K-means and GMM on non-stationary data. Since MILDA, K-means and GMM do not require labels to train, it can adapt to both sudden and gradual changes in the statistics.}
    \label{fig:steeringVectorAdaptive}
\end{figure}
\section{Computational cost}
\label{sec:CompCost}
Simple models such as K-means and LDA are often chosen over more complex models such as neural networks because of their low computational cost, which is particularly relevant in adaptive realizations where the classifier is continuously updated when new data becomes available. It is therefore relevant to study whether MILDA has a similar computational cost. Table \ref{tab:CompCost} shows the average time (in ms) each model required to train using all default parameters from Section \ref{sec:experiments}. MILDA is only slightly slower than LDA, but this is mainly because LDA has access to the ground-truth covariance matrices. If LDA also needs to estimate $\Sigma_+$ and $\Sigma_-$ from data, there is no longer a difference in computation time. The K-means and GMM models take over 10x longer to compute in this specific experiment. 

To examine the computational cost to train each model at much larger scales, it can be useful to examine their asymptotic complexity. It is well known that LDA has a time complexity of $\mathcal{O}(ND^2) + \mathcal{O}(D^3)$, with $N$ the number of samples and $D$ the number of features (i.e. the dimension of $\Vec{x}_i$). Similarly, K-means has a time complexity of $\mathcal{O}(ND)$ \cite{Arthur2007} and GMM's have a time complexity of $\mathcal{O}(ND^2I) + \mathcal{O}(D^3I)$, with $I$ the number of iterations that GMM's train (assuming naive computations of $\Sigma^{-1}$ for LDA and GMM's). MILDA has the same time complexity as LDA, namely $\mathcal{O}(ND^2) + \mathcal{O}(D^3)$. Since GMM's need to make similar computations as MILDA, but repeated at every iteration, a GMM model will always be significantly slower than a MILDA model. MILDA is also expected to be faster than K-means if the data dimensionality $D$ is relatively small. In the aforementioned experiments, K-means was faster than MILDA for a data dimensionality $D>80$ at $N=1000$ samples and $D>120$ for $N=10000$ samples (although K-means no longer performed better than chance in these cases, while MILDA barely dropped in performance, as was shown in Figure \ref{fig:Dimensions}).

When MILDA is used in an adaptive context, its computational cost can be further reduced to $\mathcal{O}(D^2)$ per iteration by using similar techniques as used in, e.g., recursive least-squares (RLS) \cite{Hastie2009}. Furthermore, only the global covariance matrix and global average need to be stored in memory, making the model much less memory-intensive than e.g. an adaptive K-means model, where all the data needs to be stored.

Even though the transformations required in MILDA do require some additional computations compared to a simple LDA model, these transformations are much cheaper to compute than the estimation of the covariance matrices (assuming $N\gg D$). MILDA is, therefore, not significantly more expensive than LDA when LDA also needs to estimate the covariance matrix from the data. 

\begin{table}[ht]
    \centering
    \begin{tabular}{c|c c c c}
        Prior knowledge & LDA  & MILDA  & K-means & GMM \\
        \hline
        $\mpos$         & 0.24 & 0.31 & 3.0 & 4.7\\
        $\Vec{d}$       & 0.24 & 0.44 & 5.6 & 4.7\\
        $\Sigma_{+/-}$  & 0.24 & 0.46 & 5.7 & 4.8\
    \end{tabular}
    \caption{MILDA and LDA require about 10x less time to classify the data than K-means and GMM (time in [ms]).}
    \label{tab:CompCost}
\end{table}
\section{Conclusion \& Future Work}
\label{sec:conclusion}
When discriminating between two classes, (Fisher's) Linear Discriminant Analysis (LDA) requires knowledge of the class averages and class covariance matrices to work optimally, which usually have to be estimated from the data based on available class labels. If no labels are available, one would need to know all ground-truth statistics up to a high precision to use LDA, which rarely is the case in practice. However, at least one ground-truth statistic is often known, e.g. a single class average, the relative difference between two class averages or the class covariance matrices. In this paper, we have proposed the minimally informed LDA (MILDA) model that can calculate the same projection vector as LDA without requiring labelled data, assuming prior knowledge of only one of these ground-truth statistics. 

We have mathematically proven that the MILDA projection vector is equivalent to the LDA projection vector under the mild assumption that the classes are balanced or that the class covariances have a similar shape. When both assumptions are violated, MILDA optimises a slightly different objective function, yet we could not find a noteworthy difference in performance between both. Furthermore, we have shown both mathematically and empirically that, in general, MILDA has the nice property of being quite robust to approximation errors in cases where the classes are harder to separate, the class covariance matrices are less white or when the classes are more unbalanced, which are all cases where unsupervised methods typically have difficulties with. 

The performance of MILDA is comparable to the performance of an ideal LDA projection vector in nearly all studied cases. The relative difference in performance is nearly unaffected by the class separation, the correlation between different features, how similar the covariance matrices of both classes are or the class imbalance. MILDA is somewhat sensitive to outliers, except for the case where the relative difference between the class averages is known.

The MILDA projection vector can be computed in closed form with a computational cost comparable to the LDA projection vector. Because MILDA has a low computational cost to train and does not require any labels, it is well-suited for use in adaptive classifiers that update themselves over time using the most recent data.  

Although MILDA is promising on its own and can be used in a wide array of applications, the necessity to know at least one ground-truth statistic reduces the utility of MILDA in some research fields. To go from a minimally informed model to a truly unsupervised model, MILDA should replace the necessity to know one ground-truth statistic with an assumption on the data. Some assumptions are trivial to incorporate in MILDA, such as the assumption that the class covariances are white or that one class average is zero (respectively use case 4 and case 1 from Table \ref{tab:MILDAPreProc}). Other assumptions, such as the assumption that the classes are Gaussian or that the features are uncorrelated, but not white, are not as trivial to incorporate. Therefore, further research on how these assumptions can be leveraged in MILDA can be vital to further expand the applicability of MILDA. 
\bibliographystyle{IEEEtran}
\bibliography{references}

\begin{thebibliography}{10}
\providecommand{\url}[1]{#1}
\csname url@samestyle\endcsname
\providecommand{\newblock}{\relax}
\providecommand{\bibinfo}[2]{#2}
\providecommand{\BIBentrySTDinterwordspacing}{\spaceskip=0pt\relax}
\providecommand{\BIBentryALTinterwordstretchfactor}{4}
\providecommand{\BIBentryALTinterwordspacing}{\spaceskip=\fontdimen2\font plus
\BIBentryALTinterwordstretchfactor\fontdimen3\font minus
  \fontdimen4\font\relax}
\providecommand{\BIBforeignlanguage}[2]{{%
\expandafter\ifx\csname l@#1\endcsname\relax
\typeout{** WARNING: IEEEtran.bst: No hyphenation pattern has been}%
\typeout{** loaded for the language `#1'. Using the pattern for}%
\typeout{** the default language instead.}%
\else
\language=\csname l@#1\endcsname
\fi
#2}}
\providecommand{\BIBdecl}{\relax}
\BIBdecl

\bibitem{Hastie2009}
T.~Hastie, R.~Tibshirani, and J.~Friedman, \emph{{The Elements of Statistical
  Learning}}, ser. Springer Series in Statistics.\hskip 1em plus 0.5em minus
  0.4em\relax New York, NY: Springer New York, 2009.

\bibitem{Bouveyron2019}
C.~Bouveyron, G.~Celeux, T.~B. Murpy, and A.~E. Raftery, \emph{{Model-Based
  Clustering and Classification for Data Science}}.\hskip 1em plus 0.5em minus
  0.4em\relax Cambridge University Press, 2019.

\bibitem{Auguin2021}
N.~Auguin, D.~Morales-Jimenez, and M.~McKay, ``{Large-Dimensional
  Characterization of Robust Linear Discriminant Analysis},'' \emph{IEEE
  Transactions on Signal Processing}, vol.~69, pp. 2625--2638, 2021.

\bibitem{Liu2020}
J.~Liu, X.~Wang, R.~Wang, C.~Xu, R.~Zhao, H.~Li, S.~Zhang, and X.~Yao,
  ``{Near-infrared auto-fluorescence spectroscopy combining with Fisher's
  linear discriminant analysis improves intraoperative real-time identification
  of normal parathyroid in thyroidectomy},'' \emph{BMC Surgery}, vol.~20,
  no.~1, pp. 1--7, 1 2020.

\bibitem{Antuvan2019}
C.~W. Antuvan and L.~Masia, ``{An LDA-Based Approach for Real-Time Simultaneous
  Classification of Movements Using Surface Electromyography},'' \emph{IEEE
  transactions on neural systems and rehabilitation engineering}, vol.~27,
  no.~3, pp. 552--561, 2019.

\bibitem{Ricciardi2020}
C.~Ricciardi, A.~S. Valente, K.~Edmund, V.~Cantoni, R.~Green, A.~Fiorillo,
  I.~Picone, S.~Santini, and M.~Cesarelli, ``{Linear discriminant analysis and
  principal component analysis to predict coronary artery disease},''
  \emph{Health Informatics Journal}, vol.~26, no.~3, pp. 2181--2192, 2020.

\bibitem{Zhu2022}
F.~Zhu, J.~Gao, J.~Yang, and N.~Ye, ``{Neighborhood linear discriminant
  analysis},'' \emph{Pattern Recognition}, vol. 123, p. 108422, 2022.

\bibitem{Elkhalil2020}
K.~Elkhalil, A.~Kammoun, R.~Couillet, T.~Y. Al-Naffouri, and M.~S. Alouini,
  ``{A Large Dimensional Study of Regularized Discriminant Analysis},''
  \emph{IEEE Transactions on Signal Processing}, vol.~68, pp. 2464--2479, 2020.

\bibitem{Fisher1936}
R.~Fisher, ``{The Use of Multiple Measurements in Taxonomic Problems},''
  \emph{Annals of Eugenics}, vol.~7, no.~2, pp. 179--188, 9 1936.

\bibitem{Duda1974}
R.~O. Duda and P.~E. Hart, \emph{{Pattern classification and scene
  analysis}}.\hskip 1em plus 0.5em minus 0.4em\relax JSTOR, 1974, no.~4.

\bibitem{Lloyd1982}
S.~P. Lloyd, ``{Least Squares Quantization in PCM},'' in \emph{IEEE
  Transactions on Information Theory}, vol.~28, no.~2, 1982, pp. 129--137.

\bibitem{Arthur2007}
D.~Arthur and S.~Vassilvitskii, ``{k-means++: the advantages of careful
  seeding},'' in \emph{ACM-SIAM Symposium on Discrete Algorithms}, 2007.

\bibitem{DeLaTorre2006}
F.~De~La~Torre and T.~Kanade, ``{Discriminative cluster analysis},'' in
  \emph{Proceedings of the 23rd international conference on Machine learning},
  2006, pp. 241--248.

\bibitem{Deng2019}
P.~Deng, H.~Wang, T.~Li, S.~J. Horng, and X.~Zhu, ``{Linear discriminant
  analysis guided by unsupervised ensemble learning},'' \emph{Information
  Sciences}, vol. 480, pp. 211--221, 4 2019.

\bibitem{Wang2021}
F.~Wang, Q.~Wang, F.~Nie, Z.~Li, W.~Yu, and R.~Wang, ``{Unsupervised Linear
  Discriminant Analysis for Jointly Clustering and Subspace Learning},''
  \emph{IEEE Transactions on Knowledge and Data Engineering}, vol.~33, no.~3,
  pp. 1276--1290, 2021.

\bibitem{Cai2007}
D.~Cai, X.~He, and J.~Han, ``{Semi-supervised discriminant analysis},''
  \emph{Proceedings of the IEEE International Conference on Computer Vision},
  2007.

\bibitem{Wang2016}
S.~Wang, J.~Lu, X.~Gu, H.~Du, and J.~Yang, ``{Semi-supervised linear
  discriminant analysis for dimension reduction and classification},''
  \emph{Pattern Recognition}, vol.~57, pp. 179--189, 2016.

\bibitem{Pena2000}
D.~Pe{\~{n}}a and F.~J. Prieto, ``{The kurtosis coefficient and the linear
  discriminant function},'' \emph{Statistics {\&} Probability Letters},
  vol.~49, no.~3, pp. 257--261, 9 2000.

\bibitem{Martin-Clemente2020}
R.~Mart{\'{i}}n-Clemente and V.~Zarzoso, ``{LDA via L1-PCA of Whitened Data},''
  \emph{IEEE Transactions on Signal Processing}, vol.~68, p. 225, 2020.

\bibitem{Woodbury1950}
M.~A. Woodbury, \emph{{Inverting modified matrices}}.\hskip 1em plus 0.5em
  minus 0.4em\relax Princeton, NJ: Department of Statistics, Princeton
  University, 1950.

\bibitem{Markovich2009}
S.~Markovich, S.~Gannot, and I.~Cohen, ``{Multichannel eigenspace beamforming
  in a reverberant noisy environment with multiple interfering speech
  signals},'' \emph{IEEE Transactions on Audio, Speech and Language
  Processing}, vol.~17, no.~6, pp. 1071--1086, 8 2009.

\bibitem{Gu2012}
Y.~Gu and A.~Leshem, ``{Robust adaptive beamforming based on interference
  covariance matrix reconstruction and steering vector estimation},''
  \emph{IEEE Transactions on Signal Processing}, vol.~60, no.~7, pp.
  3881--3885, 7 2012.

\bibitem{Khabbazibasmenj2012}
A.~Khabbazibasmenj, S.~A. Vorobyov, and A.~Hassanien, ``{Robust adaptive
  beamforming based on steering vector estimation with as little as possible
  prior information},'' \emph{IEEE Transactions on Signal Processing}, vol.~60,
  no.~6, pp. 2974--2987, 6 2012.

\end{thebibliography}

\end{document}